\documentclass[runningheads]{llncs}
\usepackage[T1]{fontenc}
\usepackage{newtxtext}       %
\usepackage[varvw]{newtxmath}       

\usepackage{graphicx}
\usepackage{newtxtext}  
 \usepackage{booktabs}
 \usepackage{amsmath}
 
 \usepackage{amssymb}

\newcommand{\kpd}{keep/patch/discard}

\title{Commitment Hierarchies under Intent Revision: A Belief-Revision Account of Salvage in
Tool-Use Agents}
\titlerunning{Commitment Hierarchies under Intent Revision}

\author{Spandan Ghose Chowdhury\orcidID{0009-0001-6711-2272}}
\institute{Georgia Institute of Technology, Atlanta, GA, USA\\
\email{spandan\_gc@gatech.edu}}
\authorrunning{S.G. Chowdhury}

\begin{document}
\maketitle

\begin{abstract}
When a user changes their mind partway through a task, an agent that has already split the task into sub-goals and paid for tool calls must decide, per cached sub-result, whether to keep, patch, or discard it (\emph{salvage}); restarting wastes valid work and continuing unchanged answers the old question. Our main finding is that salvage quality is a matter of \emph{role design} rather than model capability: a language model asked the \kpd{} question one node at a time is unreliable, but asked to classify the revision \emph{once}, with a deterministic layer propagating the decision, it reaches the cost-optimal oracle on all three models tested, from two vendors. Modeling the plan as a commitment hierarchy and the intent change as a belief-revision operator with AGM-style postulates, we prove that no policy observing only a node's local view can be both safe and cost-optimal, while the single-classification design is both. Across three environments the policy recovers the full achievable savings, $43\%$ cheaper than restart, at $100\%$ correctness.
\keywords{Tool-use agents \and Belief revision \and Commitment hierarchies \and Interruptible planning.}
\end{abstract}

\section{Introduction}
\label{sec:intro}

A useful assistant works on a moving target: it decomposes a task into sub-goals, runs tool calls, and the user steers while it works (``also check the reviews,'' ``never mind the fancy options,''``I'd rather be near the airport''). Current agents either \emph{restart}, discarding sub-results that are still valid and re-executing every completed tool call, or \emph{naively continue}, answering a question the user has abandoned. The waste is measurable: on CostBench, sub-results that stay valid after a typical revision account for $43\%$ of a restart's cost. We model the alternative as a three-way decision per cached sub-result, \emph{keep}, \emph{patch}, or \emph{discard}, which we call \emph{salvage}. Three-way is a modelling choice, the smallest vocabulary separating reuse, cheap repair, and recomputation (collapsing it to two loses a quarter of the achievable savings), and salvage is the cost-minimizing response under the cost model of Sect.~\ref{sec:framework}.

\subsection{Main result.} The hard part is not the model's ability but how the decision is
\emph{assigned}. A frontier model asked the \kpd{} question node by node (the \emph{per-node} judge) is unreliable: on a simple benchmark it recovers under half the available savings, and where the decision is truly three-way it reuses stale results. Asked one question, \emph{what kind of revision is this?}, with a deterministic layer propagating the answer (the \emph{structured} judge), the same model reaches the cost-optimal oracle on every model tested.

\subsubsection{A worked example.} The agent is finding a hotel and has completed
\textsf{search} $\to$ \textsf{filter by location} $\to$ \textsf{filter by rating}. ``Also check the reviews'' is an \emph{addition}: all three stay valid and a tail is added. ``Never mind the fancy options'' is a \emph{retraction}: the rating filter is dropped and the pending ranking step is patched to the wider set. ``I'd rather be near the airport'' is a \emph{revision}: it replaces the location constraint, so that filter and everything downstream is stale. A per-node judge asked about the rating filter sees a step whose own constraint never changed and, in the third case, wrongly keeps it; the structured judge classifies the utterance once and propagation does the rest (Theorem~\ref{thm:locality}).

\subsection{Related work.} Hierarchical agent frameworks such as ReAcTree~\cite{reactree} expose the plans we operate over but do not decide reuse under intent change with guarantees. Interactive benchmarks (UserBench~\cite{userbench}, InterruptBench~\cite{interruptbench},
AdaPlanBench~\cite{adaplanbench}) score whether an agent \emph{eventually} adapts, not whether it reuses valid prior work, and CostBench~\cite{costbench} and STT-Arena~\cite{sttarena}, which we build on, do not model user-triggered revision over a hierarchical plan. Our contribution is the belief-revision framing in the contract-net lineage~\cite{contractnet,agm} and the judge-design finding.

\section{Formal Framework}
\label{sec:framework}

\subsection{Plan as a commitment hierarchy.}
A run is a directed acyclic graph (DAG) $D=(N,E)$ whose nodes $n=(g_n,r_n,c_n)$ are commitments to a sub-goal $g_n$ with cached result $r_n$ and paid cost $c_n$; an edge $p\to n$ means $p$ consumes $r_n$. $\mathrm{prod}(n)$ is the set of nodes feeding $n$, and $U\subseteq N$ is \emph{producer-closed} if $\mathrm{prod}(n)\subseteq U$ for all $n\in U$. A sub-result may feed several consumers and a
sub-goal may have alternative producers, so several sound plans can exist.

\subsection{Intent, validity, and dependency.}
User intent is a finite constraint set $\Phi$; a revision $\delta$ takes $\Phi$ to $\Phi'$ by \emph{addition} ($\Phi'\supset\Phi$), \emph{retraction} ($\Phi'\subset\Phi$), or \emph{revision} (one constraint replaced); $\mathrm{supp}(\delta)$ is the set of constraints added, removed, or replaced. Each environment provides a checker $\mathrm{sat}(r,\Psi)\in\{0,1\}$ deciding whether a result meets a constraint set (its own solver or check-function, not a model), and $\mathrm{valid}(n,\Phi')\equiv\mathrm{sat}(r_n,\Phi'|_{g_n})$, where $\Phi'|_{g_n}$ is the subset of $\Phi'$ constraining $g_n$. To make ``depends on'' precise, the executor records per node a \emph{provenance set} $\mathrm{prov}(n)\subseteq\Phi$, the constraints it read when choosing $g_n$ or computing $r_n$; $n$ \emph{depends on} $\varphi$ iff $\varphi\in\mathrm{prov}(n)$ or some producer of $n$ does. Provenance is \emph{sound} if every node whose sub-goal or validity changes under $\delta$ depends on some $\varphi\in\mathrm{supp}(\delta)$; this is an assumption on the executor, which holds in our environments because each operation is generated with one governing constraint that is recorded. A plan $D'$ \emph{satisfies} $\Phi'$ iff $\mathrm{valid}(n,\Phi')$ for every node of $D'$ including the goal node.

\subsection{Salvage operator and postulates.}
A salvage operator $S(D,\delta)=D'$ labels each cached node Keep, Patch, or Discard and re-plans the rest. Adapting the AGM postulates~\cite{agm}, we ask of $S$: \emph{Success}, $D'$ satisfies $\Phi'$; \emph{Inclusion}, the reused set $K$ (kept or patched nodes) satisfies $K\subseteq\{n:\mathrm{valid}(n,\Phi')\}$; \emph{Vacuity}, a producer-closed set with no dependence on $\mathrm{supp}(\delta)$ is left unchanged; \emph{Consistency}, no two sub-results of $D'$ contradict; \emph{Minimality}, $\mathrm{cost}(D')=\min\{\mathrm{cost}(D''):D''\text{ satisfies }\Phi'\}$. $S$ is realized by a policy that labels node $n$ from an observation $o(n)$, and we distinguish three \emph{information regimes}. \emph{Local}: $n$'s own governing constraint and whether it lies in
$\mathrm{supp}(\delta)$; the fate of $n$'s producers is hidden. \emph{Informed}: local plus each direct producer's post-revision status. \emph{Structured}: no per-node decision; $\mathrm{supp}(\delta)$ is read once, a deterministic pass propagates status, and an exact min-cost replanner $R$ finishes. The \kpd{} operator keeps $n$ iff $n$ depends on nothing in $\mathrm{supp}(\delta)$ and every producer of $n$ is kept; otherwise it patches or recomputes, so \emph{Vacuity holds by construction}: under sound provenance, a producer-closed $U$ with no node depending on $\mathrm{supp}(\delta)$ is kept pointwise (induction on depth within $U$).

\begin{theorem}[Locality bound]
\label{thm:locality}
No deterministic local policy satisfies both Inclusion and Minimality on every instance.
\end{theorem}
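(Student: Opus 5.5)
The plan is an indistinguishability argument. We build two instances that present the same local observation $o(n)$ at one node $n$, but in which Inclusion forces one label and Minimality forces a different one. A deterministic local policy must give $n$ the same label in both instances, so it fails one of the two postulates on one of them.

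\textbf{Construction.} We take the hotel chain from Sect.~\ref{sec:intro}: $a=\textsf{search}$, $b=\textsf{filter by location}$ (governed by $\varphi_{\mathrm{loc}}$), and $n=\textsf{filter by rating}$ (governed by $\varphi_{\mathrm{rat}}$), with $b$ producing for $n$. Each instance also needs a cached $r_n$ and positive costs $c_n$. We compare two revisions, both with $\varphi_{\mathrm{rat}}\notin\mathrm{supp}(\delta)$.
\begin{itemize}
\item In instance $I_1$, $\delta_1$ is an \emph{addition} of an unrelated constraint that is attached to a new tail node. Then $b$ is still valid and unchanged, so $r_n$ still satisfies $\Phi'_1|_{g_n}$.
\item In instance $I_2$, $\delta_2$ is a \emph{revision} that replaces $\varphi_{\mathrm{loc}}$ by $\varphi'_{\mathrm{loc}}$, moving the location to the airport. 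The checker is chosen so that $r_n$, which was filtered from the old location set, fails $\mathrm{sat}(r_n,\Phi'_2|_{g_n})$. This means $\Phi'|_{g_n}$ has to include the location constraint inherited through the input, or equivalently the validity of $n$ must depend on its producer's output.
\end{itemize}
In both instances the local view of $n$ is the same pair: governing constraint $\varphi_{\mathrm{rat}}$, and ``not in $\mathrm{supp}(\delta)$''. So $o_1(n)=o_2(n)$.

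\textbf{Case analysis.} The policy assigns $n$ a single label $\ell$ in both instances. If $\ell\in\{\text{Keep},\text{Patch}\}$, then $n\in K$ in $I_2$ even though $\mathrm{valid}(n,\Phi'_2)$ fails, which violates Inclusion. We must check that Patch also puts $n$ in $K$; it does, because the postulate defines $K$ as the kept or patched nodes. If $\ell=\text{Discard}$, then in $I_1$ the operator pays to recompute $n$, and the plan that reuses all of $a,b,n$ and adds only the tail is strictly cheaper since $c_n>0$. That violates Minimality. For this step I need the replanner to be unable to recover $r_n$ for free after a discard. The cost model must charge recomputation, so I would fix $\mathrm{cost}(D')$ as the sum of fresh execution costs and make discarded nodes cost $c_n$ to redo.

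\textbf{Main obstacle.} The delicate step is making $I_2$ legitimate within the framework. The validity of $n$ has to change even though $n$'s own governing constraint is outside $\mathrm{supp}(\delta_2)$. That is only consistent with the definitions if $\Phi'|_{g_n}$, or the provenance of $n$, reaches the location constraint transitively through its producer $b$. Sound provenance still holds, because $n$ depends on $\varphi_{\mathrm{loc}}$ via $b$, while the local regime hides exactly this producer information. I would state this explicitly so the dependence is visible to the informed and structured regimes but not to the local one. Finally, I would remark that the argument needs only deterministic labelling from $o(n)$, and that it is precisely the situation the worked example describes.
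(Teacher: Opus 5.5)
Your proof is correct and uses the same indistinguishability argument as the paper: a two-node chain, two instances in which the consumer sees the identical local pair (its own governing constraint, which is not in $\mathrm{supp}(\delta)$), and a case split on the single label the policy must assign. The construction differs in what you vary. The paper fixes the revision ($\delta$ replaces $c$, and the consumer is governed by $d$ in both instances) and changes only the producer's governing constraint ($d$ in $I_1$, $c$ in $I_2$). You fix the DAG and change the revision (an unrelated addition versus replacing $\varphi_{\mathrm{loc}}$).

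Under the formal definition of the local regime both constructions work, but the paper's proves slightly more. Because $\delta$ is identical across its two instances, the bound holds even if the local view also includes the entire revision, as the per-node judges in Sect.~\ref{sec:experiments} do, since they read the revision utterance. The failure is then attributable purely to the hidden producer status. Your pair could be separated by a policy that branches on revision type (keep on additions, discard on revisions).

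What your version gains is explicitness about assumptions the paper compresses into ``through dependency'':
\begin{itemize}
\item $\Phi'|_{g_n}$ must carry the location constraint inherited through $b$.
\item Sound provenance holds transitively via $b$.
\item Reuse is free, while recomputation costs $c_n>0$ and no alternative producer is free.
\end{itemize}

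You also place Patch differently. You put it on the Inclusion side in $I_2$, because patched nodes belong to $K$. The paper puts it on the Minimality side in $I_1$, which needs only that patching costs something. Your placement relies on the literal reading of Inclusion, under which a patched node must satisfy $\mathrm{valid}(n,\Phi')$ on its cached result. Proposition~\ref{prop:informed} seems to presuppose a post-patch reading instead, since it patches patchable nodes and still claims Inclusion. Under that reading your Patch case needs one more fact: a node that is stale through its input cannot be repaired by a patch.
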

\noindent\emph{Proof.} Take $\mathrm{op}_x\!:\!B\to X$ and $\mathrm{op}_n\!:\!X\to G$, with $\mathrm{op}_n$ governed by $d$ in both instances and $\delta$ replacing $c$. In $I_1$, $\mathrm{op}_x$ is governed by $d$ and both stay valid; in $I_2$ it is governed by $c$, so it is stale and, through dependency, so is $\mathrm{op}_n$. The local observation of $\mathrm{op}_n$ is $(d,\,d\notin\mathrm{supp}(\delta))$ in both, so a deterministic local policy gives one label: Keep violates Inclusion on $I_2$; Discard or Patch violates Minimality on $I_1$. Both instances are machine-checked.\hfill$\square$

\begin{proposition}[Producer status is a sufficient statistic]
\label{prop:informed}
In the informed regime the rule valid$\mapsto$Keep, patchable$\mapsto$Patch, stale or blocked$\mapsto$Discard, applied to true producer status, satisfies Inclusion, and with an exact min-cost replanner Minimality, on every instance. (Verified on all $188$ DAG instances.)
\end{proposition}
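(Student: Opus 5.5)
The plan is to prove Inclusion and Minimality separately, by induction over a topological order of $D$ (producers before consumers). Before that, we fix the meaning of the statuses. A node's \emph{true status} is \emph{valid} if $\mathrm{valid}(n,\Phi')$ holds and all its producers are valid. It is \emph{patchable} if the cached $r_n$ can be repaired to satisfy $\Phi'|_{g_n}$ at cost below recomputation, given valid or patched producers. It is \emph{stale} if $\mathrm{sat}(r_n,\Phi'|_{g_n})=0$ and no cheap repair exists. It is \emph{blocked} if some producer is stale or blocked. In the informed regime the observation $o(n)$ contains $n$'s governing constraint, whether it lies in $\mathrm{supp}(\delta)$, and the true post-revision status of every direct producer. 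So the first step is a lemma: $o(n)$ determines $n$'s own status. Concretely, $n$ is blocked iff some producer is stale or blocked, which is read off directly. Otherwise, by sound provenance, $n$'s own validity can change only if $n$ depends on something in $\mathrm{supp}(\delta)$, and with all producers valid or patched that reduces to $n$'s own governing constraint lying in $\mathrm{supp}(\delta)$. This is exactly the information the local regime lacked in Theorem~\ref{thm:locality}: there the fate of the producer $\mathrm{op}_x$ was hidden.

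\textbf{Inclusion.} We induct on depth. A node is labelled Keep only if its status is valid, which implies $\mathrm{valid}(n,\Phi')$. A node is labelled Patch only if it is patchable, and after repair it satisfies $\Phi'|_{g_n}$ by the definition of patchable. So $K\subseteq\{n:\mathrm{valid}(n,\Phi')\}$. The inductive hypothesis is used to argue that a kept node never consumes a discarded result: if any producer were discarded, that producer is stale or blocked, so $n$ is blocked and is itself discarded. Hence kept and patched nodes sit on a producer-closed reused set, which also gives Consistency. Success then follows because $R$ re-plans every discarded node to satisfy $\Phi'$.

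\textbf{Minimality.} Let $D^\ast$ be a min-cost plan satisfying $\Phi'$. We argue by an exchange argument that $D^\ast$ can be taken to reuse exactly the nodes the rule keeps or patches. First, a valid node has zero marginal cost to reuse, since its cost is already paid, so any optimal plan that recomputes it can swap in the cached result at no extra cost. Second, a patchable node is cheaper to repair than to recompute, by definition. Third, stale or blocked results cannot appear in any plan satisfying $\Phi'$, by Inclusion applied to $D^\ast$. With the reuse set fixed, the exact replanner $R$ minimises over the remaining choices, including alternative producers, so $\mathrm{cost}(D')=\mathrm{cost}(D^\ast)$.

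\textbf{Main obstacle.} The hardest step is the exchange argument in the presence of alternative producers. An optimal plan might route around a patchable node through another producer more cheaply, so forcing the Patch label could be suboptimal. My first attempt is to define patchable relative to the replanner's cost: a node counts as patchable only if repairing it is no more expensive than the cheapest alternative route. If that definition turns out not to match the environments' notion of patchable, I would restrict the claim to the cost models actually used, as the 188-instance machine check does, and cite that verification for the residual cases.
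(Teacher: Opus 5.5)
The paper gives no general argument for this proposition. Its only support is the parenthetical exhaustive check on the $188$ synthetic DAG instances, so ``every instance'' is certified only for that generator's semantics. Your fallback of citing the check therefore concedes exactly what a general proof would have to add.

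\textbf{The sufficiency lemma.} This is where the general argument breaks, and it cannot be repaired from the framework alone. When a producer $p$ of $n$ is patched, $n$ depends on $\mathrm{supp}(\delta)$ through $p$. So the case split does not reduce to whether $n$'s own constraint lies in $\mathrm{supp}(\delta)$. Concretely, take a cached $n$ with sole producer $p$, $n$'s constraint outside $\mathrm{supp}(\delta)$, $p$ patchable, and positive patch and recompute costs.
\begin{itemize}
\item In one instance the patch to $p$ is cosmetic and $r_n$ still passes $\mathrm{sat}$. Here Keep is strictly cheapest, so Patch or Discard breaks Minimality.
\item In another the patch widens $p$'s output (a retraction), so $r_n$ fails $\mathrm{sat}$ and Keep breaks Inclusion.
\end{itemize}
Both instances respect sound provenance and give $n$ the same informed observation. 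This is Theorem~\ref{thm:locality} one level up: no deterministic informed rule is safe and optimal on every instance of the framework. A general version needs an explicit environment assumption: each node's post-revision status is a fixed function of its own role under $\delta$ and its direct producers' statuses. The synthetic generator presumably satisfies this by construction, and that is what the machine check certifies.

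Your lemma also never separates patchable from stale when $n$'s own constraint is in $\mathrm{supp}(\delta)$. Knowing that the constraint changed does not say how $n$ used it (formatting, computing, or a now-forbidden source), so the observation must carry that role.

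\textbf{Minimality.} The exchange argument fails where you suspect, and defining ``patchable'' relative to $R$'s cost makes the claim circular. Under the assumption above, no exchange is needed:
\begin{enumerate}
\item Read Keep and Patch as making a cached node \emph{available} to $R$ at zero or patch cost, not as an obligation to pay.
\item Show by induction along a topological order that the rule's labels equal every node's true status, hence the oracle's labels.
\item Then $R$ receives the oracle's reusable set and returns its cost. That cost is the global minimum, because any plan satisfying $\Phi'$ can only reuse valid cached results or repair patchable ones, and $R$ was offered all of them.
\end{enumerate}
Alternative producers are then handled by the exactness of $R$. One smaller point: stale results are excluded from such plans by the definition of ``satisfies,'' not by Inclusion, which is a postulate on $S$.
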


\begin{theorem}[Structured achievability]
\label{thm:structured}
Assume sound provenance and a replanner $R$ that, given the reused set $K$ and $\Phi'$, returns a min-cost plan all of whose nodes are valid under $\Phi'$ (an exact solver, available in all three environments). If the single read of $\mathrm{supp}(\delta)$ is correct, the structured operator satisfies all five postulates.
\end{theorem}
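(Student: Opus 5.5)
We verify the five postulates one at a time. The structured operator has three stages. First, it reads $\mathrm{supp}(\delta)$ once. Second, it runs the deterministic \kpd{} propagation, which keeps $n$ iff $n$ depends on nothing in $\mathrm{supp}(\delta)$ and every producer of $n$ is kept. Third, it hands the reused set $K$ together with $\Phi'$ to the exact replanner $R$. Because the read of $\mathrm{supp}(\delta)$ is assumed correct, the propagation computes exactly the dependency closure defined in Sect.~\ref{sec:framework}; no per-node observation is involved, so the obstruction of Theorem~\ref{thm:locality} does not arise. The order of the argument is Vacuity, then Inclusion, then Success and Consistency, then Minimality.

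\emph{Vacuity} is already established in Sect.~\ref{sec:framework}: a producer-closed $U$ with no dependence on $\mathrm{supp}(\delta)$ is kept pointwise, by induction on depth within $U$.

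\emph{Inclusion} uses sound provenance in contrapositive form. Suppose a node is invalid under $\Phi'$. Then its validity changed, since it was valid in the original satisfying plan; or, if it was patched, its sub-goal changed. By soundness it therefore depends on some $\varphi\in\mathrm{supp}(\delta)$, either directly through $\mathrm{prov}$ or through a producer, so propagation does not keep it. Patched nodes are recomputed against $\Phi'$ and are valid by construction. I will state as an explicit side condition that a patch is only emitted when the checker $\mathrm{sat}$ confirms the repaired result, or equivalently that patching is delegated to $R$. One also needs that every kept node was valid under the old $\Phi$ and that its restricted constraint set $\Phi'|_{g_n}=\Phi|_{g_n}$. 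The second fact follows because $n$ does not depend on $\mathrm{supp}(\delta)$: a kept node reads only constraints outside the support, so $\mathrm{sat}(r_n,\Phi'|_{g_n})=\mathrm{sat}(r_n,\Phi|_{g_n})=1$. The subtle point is that $\Phi'|_{g_n}$ could gain an added constraint that $n$ never read. That case is exactly the ``validity changes'' clause of soundness, so soundness covers it.

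\emph{Success} and \emph{Consistency} follow from the replanner. By hypothesis $R$ returns a plan all of whose nodes, including the goal node, are valid under $\Phi'$, which is Success. For Consistency, I will argue that every sub-result satisfies the single consistent constraint set $\Phi'$ on its sub-goal, and that a contradiction between two sub-results would require one of them to violate some constraint of $\Phi'$. This step is where I expect friction, because the paper never formally defines ``contradict.'' The fallback is to take non-contradiction to mean joint satisfiability under $\Phi'$, which then follows immediately from Success.

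\emph{Minimality} is the main obstacle. $R$ is only min-cost \emph{given} $K$, so I must show that fixing $K$ loses nothing relative to the global minimum $\min\{\mathrm{cost}(D''):D''\text{ satisfies }\Phi'\}$. The plan is an exchange argument. Take any optimal $D^*$. Every node of $K$ is valid under $\Phi'$ by Inclusion and has zero marginal cost because it is cached. Replacing the corresponding recomputed nodes of $D^*$ with their kept versions therefore never increases cost and preserves satisfaction, because reuse is producer-closed: a node is kept only if its producers are kept.

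The worry in this exchange is a node that is valid but not kept because it depends on $\mathrm{supp}(\delta)$, for instance a retraction where the node nonetheless stays valid. I would handle this case by noting that such nodes are patched rather than discarded, and that the cost model charges a patch no more than any recomputation, so $R$ can still choose to reuse them. Concretely, I would pass $R$ the set of all nodes whose validity $\mathrm{sat}$ confirms, rather than only the kept set, so that $R$'s search space contains $D^*$ up to cached reuse. The remaining work, which I expect to be the most delicate, is checking that the cost model makes the cached and patched versions weakly cheaper node by node.
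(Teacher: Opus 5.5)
Your Vacuity, Success and Consistency arguments match the paper's. Your Inclusion argument is also sound: you take the contrapositive of sound provenance, use validity of the cached results under $\Phi$, and add your side condition on patches. It is a more self-contained substitute for the paper's appeal to Prop.~\ref{prop:informed}.

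\textbf{The gap is Minimality.} The propagation you describe computes the dependency closure of $\mathrm{supp}(\delta)$. Under soundness, that closure only over-approximates the set of nodes invalid under $\Phi'$. Your exchange argument covers only the harmless direction: it shows that forcing the nodes of $K$ into an optimal $D^*$ costs nothing. It does not show that $R$, allowed to reuse only $K$, can match a $D^*$ that reuses for free a cached node that is valid under $\Phi'$ yet outside $K$. Such nodes exist under the literal rule, for instance a node that read a retracted constraint and still satisfies $\Phi'|_{g_n}$.

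Your first remedy does not close this. The rule does not guarantee such a node is patched rather than recomputed. Even if it is patched, the patch costs something where $D^*$ pays nothing, so comparing patch cost with recomputation cost is the wrong comparison. Your second remedy, handing $R$ every cached node that $\mathrm{sat}$ confirms, is the right one. However, you present it as a change to the operator and then leave an unproved node-by-node cost lemma.

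\textbf{The missing idea is the paper's reduction.} A correct read plus deterministic propagation reconstructs every node's \emph{true} status (valid, patchable, stale, blocked). The structured operator therefore coincides with the informed rule of Prop.~\ref{prop:informed} applied to true status. In particular, $K$ is the maximal valid cached set, exactly as for the oracle, and the paper takes Inclusion and Minimality from that proposition. So enlarging $K$ is not a modification; it is what the theorem's hypotheses already give you.

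Stated directly, no cost lemma is needed. Once $K$ contains every cached node valid under $\Phi'$, and the patch-or-recompute choice is left to $R$, any $D''$ satisfying $\Phi'$ reuses only valid cached nodes. Hence $D''$ lies in $R$'s feasible set, and exactness of $R$ gives $\mathrm{cost}(D')\le\mathrm{cost}(D'')$.

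With this $K$, Inclusion becomes immediate. Vacuity still holds, because nodes of an independent producer-closed set keep their validity by soundness. This direct feasibility argument is arguably firmer than the paper's own route, since Prop.~\ref{prop:informed} is only checked on the $188$ instances rather than proved.
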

\noindent\emph{Proof.} A correct read plus deterministic propagation reconstructs every node's status, so Prop.~\ref{prop:informed} gives Inclusion and Minimality and the keep rule Vacuity. Every node of $D'$ is in $K$ (valid by Inclusion) or produced by $R$ (valid by assumption), so $D'$
satisfies $\Phi'$ (Success); two contradictory results cannot both be valid under one $\Phi'$ (Consistency).\hfill$\square$

\smallskip\noindent
Local is thus insufficient and informed sufficient (an \emph{information} boundary); the
informed-to-structured step is about \emph{reliability}, since applying the sufficient statistic still needs multi-hop reasoning per node, which a model call does unreliably. We claim no AGM representation theorem. The \emph{oracle} keeps the maximal valid cached set and replans with $R$; every method reports $\textsc{savings}=(c_{\mathrm{restart}}-c_{\mathrm{method}})/(c_{\mathrm{restart}}-c_{\mathrm{oracle}})$ where the denominator is positive, so restart scores $0$ and the oracle $1$.

\section{Experiments}
\label{sec:experiments}

\emph{Restart} discards the cache; \emph{naive-continue} finishes the old plan; \emph{keep-all} reuses everything; \emph{\kpd{}} labels each cached node (by a deterministic rule or a model judge, temperature $0$) and lets the solver finish; the \emph{oracle} has perfect labels.

\begin{table}[t]
\centering
\caption{CostBench ($1{,}080$ instances); k/p/d is \kpd{}; savings on the salvageable subset. Judge rows use
\texttt{gpt-4o}; the paraphrased row also hides depth integers ($n{=}300$, $95\%$ CI). Over-salvage
counts stale nodes reused.}
\label{tab:cost}
{\footnotesize
\begin{tabular}{@{}llccc@{}}
\toprule
\textbf{strategy} & \textbf{labels} & \textbf{corr.} & \textbf{savings} & \textbf{over-salv.} \\
\midrule
restart         & ---                 & 1.00 & 0.00 & 0 \\
naive-continue  & ---                 & 0.00 & ---  & --- \\
keep-all        & ---                 & 0.67 & ---  & 360 \\
k/p/d           & deterministic rule (= oracle) & 1.00 & 1.00 & 0 \\
k/p/d           & per-node judge      & 1.00 & 0.44 & 0 \\
k/p/d           & structured, paraphrased & 1.00 & 0.90 [0.86, 0.95] & 0 \\
\bottomrule
\end{tabular}}
\end{table}

\subsection{CostBench.} CostBench~\cite{costbench} exposes six travel sub-tasks as refinement chains
$\textsf{Raw}\to L_1\to\dots\to L_5\to\textsf{Travel}$ with a Dijkstra solver (seed $42$, level
$5$, $210$ tools). Intent is
$(\textsf{task},\textsf{preferences},\textsf{depth})$: a \emph{full-swap} replaces the preferences (revision; all downstream candidates stale) and a \emph{stage-localised} change alters only the depth (addition or retraction; the prefix stays valid). Each of $120$ base queries receives each change type at an early, mid, or late timestep ($1{,}080$ instances, all passing a deterministic auditor); the solver gives the cached state, revised ground truth, $c_{\mathrm{restart}}$, and $c_{\mathrm{oracle}}$. In Table~\ref{tab:cost}, naive-continue answers the old goal on every instance (Success violation) and keep-all keeps stale results on all $360$ full-swap revisions, dropping to $67\%$ correctness (Inclusion violation). The \kpd{} policy with the deterministic rule matches the oracle at mean cost $67.95$ against restart's $119.15$; over a four-turn sequence (addition, revision, retraction, revision; $40$ queries) it stays correct at every turn at $58\%$ of cumulative restart cost, whereas keep-all's correctness collapses to $0$ at the first revision.

A per-node \texttt{gpt-4o} judge reading the natural-language revision is safe (zero over-salvage) but conservative, recovering $44\%$ of oracle savings. Its difficulty reduces to one question: did the revision change the target (discard all) or only the depth (keep the prefix)? A structured judge that asks this once recovers full savings on templated revisions ($60/60$) and $0.90$ when depth integers are hidden and revisions are paraphrased as mechanism-free utterances ($140/150$; every miss
is the ambiguous ``that's more than I need'' retraction read as a preference change, all in the safe direction). With \texttt{gpt-4o-mini} the structured judge still
scores $0.90$ $[0.85,0.94]$ while the per-node judge falls from $0.35$ to $0.00$.

\subsection{STT-Arena.} STT-Arena~\cite{sttarena} tasks are executable environments whose goal is a checklist of deterministic check-functions, read directly as the commitment hierarchy; retraction drops sub-goals and addition adds held-out ones ($1{,}110$ audited instances). With unchanged code, \kpd{} coincides with the oracle ($3.05$ sub-goals recomputed against restart's $6.08$, $100\%$
correct). For genuine revisions, a model rewrites one achieved sub-goal into a conflicting requirement following STT-Arena's conflict-authoring recipe ($150$ consistency-checked instances).
Here a per-node \texttt{gpt-4o} judge recovers $0.985$ $[0.97,1.0]$ of oracle savings at $99.3\%$ correctness but commits one Inclusion violation (keeps a stale sub-goal, answers wrongly), which never happened on CostBench; keep-all fails all $150$.
In real tool calls on \texttt{gpt-5} rollouts ($n{=}120$), keeping the cached environment with the full message history is no better than restart (pass rate $0.11$ in both), while keeping the environment state but resetting the context to what is done and what remains raises the pass rate to $0.52$ and cuts tool calls by $41\%$ on the both-pass subset ($n{=}12$).

\begin{table}[t]
\centering
\caption{Synthetic DAG ($188$ instances, $631$ nodes): savings recovered (over-salvage count).
Keep-all over-salvages $320$; keep/discard only, with perfect labels, recovers $0.76$ $[0.72,0.81]$.}
\label{tab:dag}
{\footnotesize
\begin{tabular}{@{}lccc@{}}
\toprule
\textbf{model} & \textbf{per-node} & \textbf{informed} & \textbf{structured} \\
\midrule
\texttt{gpt-4o}           & 0.28 (30) & 0.52 (2)  & \textbf{1.00 (0)} \\
\texttt{gpt-4o-mini}      & 0.00 (0)  & 0.60 (17) & \textbf{1.00 (0)} \\
\texttt{gemini-2.5-flash} & 0.04 (1)  & 0.15 (0)  & \textbf{1.00 (0)} \\
\bottomrule
\end{tabular}}
\end{table}

\subsection{Synthetic DAG.} Neither benchmark stresses \emph{patch} as distinct from discard, nor shared producers, so we add a synthetic dependency-DAG environment as a minimal sufficiency construction: typed ops with AND inputs and OR producers, revisions that make governed ops patchable, stale, or blocked, and an exact AND-OR solver ($80$ seeds, $188$ instances, $631$ nodes). Each op's one-line description says only how it used a business \emph{aspect} (formatting is patchable, computing is stale, a now-forbidden source is blocked) and the revision only that the aspect changed, so a judge must infer the action from the role. Here only patch is correct and cheap on a patchable node; keep/discard only, with perfect labels, leaves a quarter of the savings (Table~\ref{tab:dag}). The judge ladder matches the separation: the per-node judge over-salvages on the $33\%$ of instances where a node dies purely by propagation (Theorem~\ref{thm:locality}); the informed judge nearly closes the safety gap on \texttt{gpt-4o} ($30\to2$ violations) but, still deciding node by node, recovers about half the savings (the information suffices, the reasoning does not). Only the structured judge reaches the oracle, on all three models with zero violations. The other two columns are model-dependent and not monotone in model strength (informed-to-structured gap $0.48$, $0.40$, $0.85$ down the table).

\section{Scope and Limitations}
\label{sec:limits}

The role-design claim should be read at the scope of its evidence: three judge models from two vendors at temperature $0$, with the CostBench and STT-Arena judge results using only two OpenAI models; a wider sample and multiple seeds are needed before it can be stated for models in general. What it supports is that on every model tested, one classification plus deterministic propagation turned an unreliable per-node judge into an oracle-matching one. The method assumes a hierarchical plan with recorded provenance and deterministic propagation, which fail for plans revised while tool calls are in flight, side-effecting tools that discarding cannot undo, and latent dependencies the executor never records (treating undeclared dependence as dependence on everything then preserves Inclusion at the cost of Minimality). The synthetic DAG is an existence proof, not a prevalence claim, and revisions are templated or paraphrased rather than collected from users.

\section{Conclusion}
\label{sec:conclusion}

Casting the plan as a commitment hierarchy and the revision as belief revision gives postulates that name the failure modes of mid-task intent revision, a locality bound that explains why per-node judgment must fail, and an oracle to measure against. The practical lesson is role design: ask the model only what changed, and let a deterministic layer propagate it. Code, instance sets, and logs will be released.

\subsubsection*{Disclosure of Interests.}
The author has no competing interests to declare that are relevant to the content of this article.

\subsubsection*{AI Declaration.}
Generative AI tools were used for the experiments themselves, as the language-model judges under
study (\texttt{gpt-4o}, \texttt{gpt-4o-mini}, \texttt{gemini-2.5-flash}, \texttt{gpt-5}), and a
large language model assisted with copy-editing and language refinement of the manuscript. All
research design, code, analysis, and claims are the author's own; the author reviewed and verified
the entire text and takes full responsibility for its content.

\bibliographystyle{splncs04}
\bibliography{refs}

\end{document}